\newcommand{\vect}[1]{\pmb{#1}}
\newcommand{\mat}[1]{\pmb{#1}}
\newtheorem{prop}{Proposition}
\newtheorem{theorem}{Theorem}
\newtheorem{lemma}{Lemma}
\def\ben{\begin{equation*}}
\def\een{\end{equation*}}
\def\be{\begin{equation}}
\def\ee{\end{equation}}
\def\beaa{\begin{eqnarray*}}
\def\eeaa{\end{eqnarray*}}
\def\bea{\begin{eqnarray}}
\def\eea{\end{eqnarray}}
\def\bleq{\begin{flalign}}
\def\eleq{\end{flalign}}
\begin{document}
%
\title{ICR: Iterative Convex Refinement for Sparse Signal Recovery Using Spike and Slab Priors}
%
%
%

\author{Hojjat S. Mousavi, \emph{Student Member, IEEE}, Vishal Monga, \emph{Senior Member, IEEE}, and Trac D. Tran, \emph{Fellow, IEEE}
\thanks{H. S. Mousavi and V. Monga are with the Department
of Electrical  Engineering, The Pennsylvania State University, University Park,
PA, 16802 USA, e-mail: hojjat@psu.edu.}
\thanks{T. D. Tran is with the Department of Electrical and Computer Engineering, Johns Hopkins University, Baltimore, MD, 21218 USA.}
\thanks{This work has been supported partially by  the Office of Naval Research (ONR) under Grant N00014-12-1-0765 and NSF CAREER award to (V.M.)}
}



\maketitle

\begin{abstract}
In this letter, we address sparse signal recovery using spike and slab priors. In particular, we focus on a Bayesian framework where sparsity is enforced on reconstruction coefficients via probabilistic priors. The optimization resulting from spike and slab prior maximization is known to be a hard non-convex problem, and existing solutions involve simplifying assumptions and/or relaxations. We propose an approach called Iterative Convex Refinement (ICR) that aims to solve the aforementioned optimization problem directly allowing for greater generality in the sparse structure. Essentially, ICR solves a sequence of convex optimization problems such that sequence of solutions converges to a sub-optimal solution of the original hard optimization problem. We propose two versions of our algorithm: a.) an unconstrained version, and b.) with a non-negativity constraint on sparse coefficients, which may be required in some real-world problems. Experimental validation is performed on both synthetic data and for a real-world image recovery problem, which illustrates merits of ICR over state of the art alternatives.
\end{abstract}

\begin{IEEEkeywords}
Compressive sensing, Bayesian inference, sparse signal, optimization, spike and slab prior, image reconstruction
\end{IEEEkeywords}

%
\IEEEpeerreviewmaketitle

\section{Introduction}

\IEEEPARstart{S}{P}arse signal approximation and compressive sensing (CS) have recently gained considerable interest both in signal and image processing as well as statistics. Sparsity is  often a natural assumption in inverse problems and sparse reconstruction or representation has variety of applications in image/signal classification \cite{Wright:SRC_PAMI2009,Srinivas:SHIRC_TMI2014,Srinivas:SSPIC_ICIP2013, Srinivas:SHIRC_ISBI2013, Mousavi:MICHS_ICIP2014, Bahrampour:TreeSparsity_CVPR2014}, dictionary learning \cite{Suo1:DirtyDicLearn_ICIP2014, Pourkamali:CompresiveKSVD_ICASSP2013, SadeghiAndBabaiezade1:DicLearnSparse_SPLetter2013, Vu:DFDL_ISBI2015, Bahrampour:DicLearn_Arxiv2015, Bahrampour:KernelDicLearn_Arxiv2015}, signal recovery \cite{Wright:SpaRSA_TSP2009 , Tropp:OMP_InfoTheory2007}, image denoising and inpainting \cite{Elad:ImageDenoiseSparsity_TIP2006 }, super resolution \cite{YangAndWright:SparseSR_TIP2010 } and  MRI image reconstruction \cite{ Andersen:BayesianSpikeSlab_NIPS2014}. Typically, sparse models assume that a signal can be efficiently represented as sparse linear combination of atoms in a given or learned dictionary \cite{Wright:SRC_PAMI2009,Sprechmann:CHI-LASSO_TSP2011}. In other words, from CS viewpoint, a sparse signal can be recovered from fewer number of observations \cite{Baraniuk:Model_CS_InfoTheory2010, Carin:WaveletBayesCS_TSP2009, JiAndCarin:BayesianCS_TSP2008 }.

A typical sparse reconstruction algorithm aims to recover a sparse signal $\vect x \in \mathbb{R}^{p }$ from a set of fewer measurements $\vect y \in \mathbb{R}^{q}$ ($q\ll p$) according to the following model:
\bea
    \vect y = \mat A \vect x + \vect n,     \label{Eq:y=Ax}
\eea
where $\mat A \in \mathbb{R}^{q\times p}$ is the measurement matrix (Dictionary) and $\vect n \in \mathbb{R}^{q}$ models the additive Gaussian noise with variance $\sigma^2$.

In recent years, many sparse recovery algorithms have been proposed including but not limited to the following: proposing sparsity promoting optimization problems involving different regularizers such as $\ell_1$ norm, $\ell_0$ pseudo norm, greedy algorithms \cite{Tropp:OMP_InfoTheory2007,  Cai:OMP_InfoTheory2011, Mousavi:AssymLASSO_arXive2013, Mohimani:fast_l_0_TSP2009}, Bayesian-based methods \cite{JiAndCarin:BayesianCS_TSP2008, Lu:SparseCodeBayesPerspec_NeuralNetLearn2013, DobigeonAndHero:HierarchyBayesImageRecons_TIP2009} or general sparse approximation algorithms such as SpaRSA, ADMM, etc. \cite{Wright:SpaRSA_TSP2009, BeckerAndCandes:SparseRecoveryNESTA_ImagScienSIAM2011, Beck:IterativeShrinkageThresholdFISTA_ImagScienSIAM2009, Boyd:ADMM_MachineLearn2011}.

In this letter, we focus on sparse recovery from a Bayesian perspective by using hierarchical priors. In Bayesian sparse recovery, the choice of priors plays a key role in promoting sparsity and improving performance. Examples of such priors are Laplacian \cite{Babacan_BayesianCSLaplacePriors_TIP2010}, generalized Pareto \cite{Cevher:SparseRecovGraphicalModel_SPMagaz2010}, Spike and Slab \cite{Mitchell:BayesVarSelectSpikeSlab_StatAssoc1988}, etc. Amongst these priors, a well-suited sparsity promoting prior is spike and slab prior which is widely used in sparse recovery and Bayesian inference for variable selection and regression \cite{Ishwaran_SpikeSlab_AnnStat2005, Carin:WaveletBayesCS_TSP2009, Andersen:BayesianSpikeSlab_NIPS2014, Suo:HierarchySpikeSlab_ICASSP2013}. In fact, it is acknowledged that spike and slab prior is indeed the \emph{gold standard} for inducing sparsity in Bayesian inference \cite{Lazaro:SpikeSlabInferMultiTask_NIPS2011}.


Using these priors for sparse recovery leads to non-convex, non-smooth, mixed integer programming optimization problems which are often solved by means of relaxation or simplifying assumptions on the model parameters \cite{Yen:MM_VariableSelectionSpikeSlab_Stat2011, Srinivas:SSPIC_ICIP2013, Andersen:BayesianSpikeSlab_NIPS2014}. However, in this work we aim to solve the spike and slab optimization problem directly in its general form. Motivated by this, the \textbf{Main Contributions} of our work are as follows: (1) We propose a novel Iterative Convex Refinement (ICR) method to solve the optimization problem resulting from exploiting spike and slab priors. Essentially,  the sequence of solutions from these convex problems approaches a sub-optimal solution of the hard non-convex problem. (2) We propose two versions of ICR: a.) an unconstrained version, and b.) with a non-negativity constraint on sparse coefficients, which may be required in some real-world problems such as image recovery. (3) Finally, we perform experimental validation on both synthetic data and a realistic image recovery problem, which reveals the benefits of ICR over other state-of-the-art recovery methods using spike and slab priors. Further, we compare the solution of various sparse recovery methods against the \emph{global solution} for a small-scale problem, and remarkably the proposed ICR finds the most agreement with the global solution. Finally, convergence analysis is provided in support of the proposed ICR algorithm.
\section{Spike and Slab Sparse Signal Recovery }
Introducing priors for capturing sparsity is a particular example of Bayesian inference where the signal recovery can be enhanced by exploiting contextual and prior information. As suggested by \cite{Cevher_LearningCompressiblePriors_NIPS2009, Cevher_SparseRecoveryGraphModels_SPM2010}, sparsity can be induced via solving the following optimization problem:
\bea
    \max_{\vect x} P_{\vect x}(\vect x) & \textit{subject to} & ||\vect y - \mat A \vect x||_2 < \epsilon.        \label{Eq:GeneralMaxPrior}
\eea
where $P_{\vect x}$ is the probability distribution function of $\vect x$ that  captures  sparsity. The most common example is the i.i.d. Laplacian prior which is equivalent to $\ell_1$ norm minimization \cite{Cevher:SparseRecovGraphicalModel_SPMagaz2010}.
In this work, we focus on the spike and slab prior for inducing sparsity on $\vect x$. Using this prior, every coefficient $x_i$ is modeled as a mixture of two densities as follows:
\bea
    x_i \sim (1-\gamma_i) \delta_0 + \gamma_i P_i(x_i) \label{Eq:GeneralSpikeSlab}
\eea
where $\delta_0$ is the Dirac function at zero (spike) and $P_i$ (slab) is an appropriate prior distribution for nonzero values of $x_i$ (e.g. Gaussian). $\gamma_i \in [0,1]$ controls the structural sparsity of the signal. If $\gamma_i$ is chosen to be close to zero $x_i$  tends to remain zero. On the contrary, by choosing $\gamma_i$ close to 1, $P_i$ will be the dominant distribution encouraging $x_i$ to take a non-zero value.

\noindent\textbf{Optimization Problem} (Hierarchical Bayesian Framework):
Inspired by Bayesian compressive sensing (CS) \cite{JiAndCarin:BayesianCS_TSP2008, Suo:HierarchySpikeSlab_ICASSP2013}, we employ a hierarchical Bayesian framework for signal recovery.
In this model, priors are employed on $\vect y$ and $\vect x$. We also define $\gamma_i$ to be the indicator variable for the coefficient $x_i$, i.e. $\gamma_i \triangleq \mathbb{I}(x_i \neq 0)$. It takes the value one only if the corresponding coefficient $x_i$ is not zero, and zero otherwise. More precisely, the Bayesian formulation is as follows:
\bea
    \vect y | \mat A, \vect x, \vect \gamma, \sigma^2 & \sim &   \mathcal{N} \left(\mat A \vect x, \sigma^2\mat I \right) \label{eq:pdf_y}\\
    \vect x | \vect\gamma, \lambda, \sigma^2 & \sim &   \prod_{i=1}^{p} ~\gamma_{i} \mathcal{N}(0,\sigma^2\lambda^{-1}) + (1-\gamma_{i}) \delta_0 \label{eq:pdf_x}\\
    \vect\gamma | \vect  \kappa & \sim &   \prod_{i=1}^{p} ~\mbox{Bernoulli}(\kappa_{i}) \label{Eq:pdf_gamma}
\eea
where $\mathcal{N}(.)$ represents the Gaussian distribution. Also note that in \eqref{eq:pdf_x} each coefficient of $\vect x$ is modeled as i.i.d spike and slab prior. In addition,  a Bernoulli distribution is used to model the indicator variable $\gamma_i$ with parameter $\kappa_i$, which controls the sparseness of the signal. Motivated by a recent \emph{maximum a posteriori} (MAP) estimation technique proposed in \cite{Yen:MM_VariableSelectionSpikeSlab_Stat2011} the optimal $\vect x, \vect \gamma$ are obtained by the following MAP estimate.
\bea
        (\vect x^\ast, \vect \gamma^\ast )& =& \arg \max_{\vect x,\vect \gamma } \left\{  f (\vect x,\vect \gamma |\mat A,\vect y,\mat \kappa,\lambda ,\sigma^2)\right\}.
        \label{Eq:MAP_Estimate}
\eea
\begin{prop}  
The MAP estimation above is equivalent to the following minimization problem: 
\bea
    (\vect x^\ast, \vect \gamma^\ast ) &=& \arg\min_{\vect x, \vect \gamma} ~~   ||\vect y  - \mat A\vect x ||_{2}^2 + \lambda ||\vect x||_{2}^2 + \sum_{i=1}^{p}  \rho_{i}  \gamma_{i}       \label{Eq:MainOptProb}
\eea
where $\rho_{i} \triangleq  \sigma^2\log\left(\frac{2\pi\sigma^2(1-\kappa_i)^2}{\lambda\kappa_i^2}\right)$.
\end{prop}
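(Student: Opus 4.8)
The plan is to start from Bayes' rule: the joint posterior factors as
\[
    f(\vect x,\vect\gamma \mid \mat A,\vect y,\vect\kappa,\lambda,\sigma^2) \;\propto\; f(\vect y \mid \mat A,\vect x,\sigma^2)\; f(\vect x \mid \vect\gamma,\lambda,\sigma^2)\; f(\vect\gamma \mid \vect\kappa),
\]
with the three factors given by \eqref{eq:pdf_y}--\eqref{Eq:pdf_gamma}. Since the normalizing constant does not depend on $(\vect x,\vect\gamma)$, the MAP problem \eqref{Eq:MAP_Estimate} is equivalent to minimizing $-\log$ of the right-hand side, so it suffices to expand this negative log-posterior and discard additive constants.

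Next I would handle the three terms one at a time. The Gaussian likelihood \eqref{eq:pdf_y} contributes $\tfrac{1}{2\sigma^2}||\vect y-\mat A\vect x||_2^2$ up to a constant. For the spike-and-slab prior \eqref{eq:pdf_x}, the crucial point is the definition $\gamma_i \triangleq \mathbb{I}(x_i\neq 0)$: on indices with $\gamma_i=0$ the $i$-th factor is the point mass at $x_i=0$ and contributes a constant, whereas on indices with $\gamma_i=1$ it equals $\mathcal{N}(x_i;0,\sigma^2\lambda^{-1})$, so
\[
    -\log f(\vect x\mid\vect\gamma,\lambda,\sigma^2) \;=\; \frac{\lambda}{2\sigma^2}\sum_{i:\gamma_i=1} x_i^2 \;+\; \frac12\log\!\big(2\pi\sigma^2\lambda^{-1}\big)\sum_{i=1}^{p}\gamma_i \;+\; \text{const},
\]
and since $x_i=0$ whenever $\gamma_i=0$ we may replace $\sum_{i:\gamma_i=1}x_i^2$ by $||\vect x||_2^2$. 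Finally the Bernoulli prior \eqref{Eq:pdf_gamma} gives $-\log f(\vect\gamma\mid\vect\kappa) = -\sum_{i=1}^{p}\gamma_i\log\frac{\kappa_i}{1-\kappa_i} + \text{const}$.

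Collecting everything, the negative log-posterior equals, up to a constant,
\[
    \frac{1}{2\sigma^2}||\vect y-\mat A\vect x||_2^2 + \frac{\lambda}{2\sigma^2}||\vect x||_2^2 + \sum_{i=1}^{p}\gamma_i\Big(\tfrac12\log\big(2\pi\sigma^2\lambda^{-1}\big) - \log\tfrac{\kappa_i}{1-\kappa_i}\Big).
\]
Multiplying by the positive constant $2\sigma^2$ does not change the minimizer and produces the first two terms of \eqref{Eq:MainOptProb}, while the coefficient of $\gamma_i$ becomes $\sigma^2\log(2\pi\sigma^2\lambda^{-1}) - 2\sigma^2\log\tfrac{\kappa_i}{1-\kappa_i} = \sigma^2\log\tfrac{2\pi\sigma^2(1-\kappa_i)^2}{\lambda\kappa_i^2} = \rho_i$, matching the statement; this closes the argument.

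The step I expect to need the most care is the treatment of the mixed discrete--continuous prior \eqref{eq:pdf_x}: one must fix the convention that its density is taken with respect to a measure combining Lebesgue measure on the active coordinates with a counting measure on the event $\{x_i=0\}$, so that the $(1-\gamma_i)\delta_0$ factor contributes a finite constant rather than an ill-defined quantity, and note that the coupling $\gamma_i=\mathbb{I}(x_i\neq0)$ makes the two branches mutually exclusive so that no cross terms survive. Everything after that is elementary algebra with logarithms.
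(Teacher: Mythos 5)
Your proposal is correct and follows essentially the same route as the paper: Bayes' rule factorization, term-by-term negative log-likelihoods, the observation that the coupling $\gamma_i=\mathbb{I}(x_i\neq 0)$ kills the $\delta_0$ contribution and lets $\sum_{i:\gamma_i=1}x_i^2$ be replaced by $\norm{\vect x}_2^2$, and a final rescaling by $2\sigma^2$ (the paper works with $-2\log f$ and rescales by $\sigma^2$, which is the same thing). Your explicit remark about the dominating measure for the mixed discrete--continuous prior is in fact a cleaner justification than the paper's informal treatment of $\log\delta_0$ via the indicator convention, but it is the same argument.
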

\begin{proof}
See supplementary material. \footnote{Also available at { \url{http://signal.ee.psu.edu/ICR/ICRpage.htm} } }
\end{proof}
\emph{Remark:} The optimization problem in \eqref{Eq:MainOptProb} is a non-convex mixed integer programming involving the binary indicator variable $\vect \gamma$ and is not easily solvable using conventional optimization algorithms. It is worth mentioning that this is a more general formulation than the framework proposed in \cite{Srinivas:SSPIC_ICIP2013} or \cite{Yen:MM_VariableSelectionSpikeSlab_Stat2011} where authors simplified the optimization problem by assuming the same $\kappa$ for each coefficient $x_i$. This assumption changes the last term in \eqref{Eq:MainOptProb} to $\rho||\vect x||_0$ and the resulting optimization is solved in \cite{Yen:MM_VariableSelectionSpikeSlab_Stat2011} by using  Majorization-Minimization Methods. Further, a relaxation of $\ell_0$ to $\ell_1$ norm reduces the problem to the well-known Elastic-Net \cite{Zou:AdapElasticNet_AnnalStat2009}. The framework in \eqref{Eq:MainOptProb} therefore offers greater generality in capturing the sparsity of $\vect x$.  As an example, consider the scenario in a reconstruction or classification problem where some dictionary (training) columns are more important than others\cite{Mohammadi:PCADicLearningSRC_Elsevier2014}. It is then possible to encourage their contribution to the linear model by assigning higher values to the corresponding $\kappa_i$'s, which in turn makes it more likely that the $i^{th}$ coefficient $x_{i}$ becomes activated.
\section{Iterative Convex Refinement (ICR)}
We first develop a solution to \eqref{Eq:MainOptProb} for the case when the entries of $\vect x$ are non-negative. Then, we propose our method in its general form with no constraints.

The central idea of the proposed Iterative Convex Refinement (ICR) algorithm -- see Algorithm 1 -- is to generate a sequence of optimization problems that refines the solution of previous iteration based on solving a modified convex problem. At  iteration $n$ of ICR, the indicator variable $\gamma_i$ is replaced with the normalized ratio $\frac{x_i}{\mu_i^{(n-1)}}$ and the convex optimization problem in \eqref{Eq:NonNegOptProb} is solved which is a simple quadratic programming with non-negativity constraint. Note that, $\mu_i^{(n-1)}$ is intuitively the average value of optimal $x_i^{\ast}$'s obtained  from iteration $1$ up to $n-1$ and is rigorously defined as in \eqref{Eq:UpdateMu}.
The motivation for this substitution is that, if the sequence of solutions $\vect x^{(n)} $ converges to a point in $\mathbb{R}^p$ we also expect $\frac{x_i}{\mu_i^{(n-1)}}$  to converge to $\gamma_i$. Essentially, ICR is solving a sequence of convex quadratic programming problem that their solution converges to a sub-optimal solution of \eqref{Eq:MainOptProb}.

To generalize ICR to the unconstrained case, a simple modification is needed at each iteration. In fact, at each iteration \eqref{Eq:UnconsOptProb} is solved instead of \eqref{Eq:NonNegOptProb}. Note that \eqref{Eq:UnconsOptProb} is still convex and we solve it by alternating direction method of multipliers \cite{Boyd:ADMM_MachineLearn2011}. Again we expect the ratio $\frac{|x_i|}{ |\mu_i^{(n-1)} |}$ to converge to the value of optimal $\gamma_i$ and the result of ICR be a sub-optimal solution for \eqref{Eq:MainOptProb}. ICR in both its versions is summarized in Algorithm 1\footnote{The Matlab code for ICR is made available online at {\url{http://signal.ee.psu.edu/ICR/ICRpage.htm}}}.
\begin{algorithm}[t]
\caption{Iterative Convex Refinement (ICR)  }
\label{Alg:NonNeg}
\begin{algorithmic}
\REQUIRE $\mat A, \vect \kappa,  \vect y $.\\
\emph{initialize: } $\vect \mu^{(0)} = \mat A^T \vect y $, iteration index $n=1$.
\WHILE{Stopping criterion not met }
\STATE(1) Solve the convex optimization problem at iteration $n$:
\STATE(Non-negative) For non-negative ICR solve
\bea
	\vect x^{(n)}  =   \arg\min_{\vect x  \succcurlyeq \vect 0} ||\vect y - \mat A\vect x ||_2^2 + \lambda ||\vect x||_2^2 +  \sum_{i=1}^{p}  \rho_{i}   \frac{x_{i}}{\mu_{i}^{(n-1)}} \label{Eq:NonNegOptProb}
\eea
\STATE(Unconstrained) For unconstrained ICR solve
\bea
	\vect x^{(n)}  =   \arg\min_{\vect x} ||\vect y - \mat A\vect x ||_2^2 + \lambda ||\vect x||_2^2 +  \sum_{i=1}^{p}  \rho_{i}   \frac{|x_{i}|}{\big|\mu_{i}^{(n-1)}\big|} \label{Eq:UnconsOptProb}
\eea
\STATE(2) \vspace{-0.32in}
\bea
    \text{Update~} \mu_i^{(n)}:~~~   \mu_{i}^{(n)} = \frac{1}{n} \sum_{k=1}^{n} x_{i}^{(k)} ~~~ i=1,...,p\label{Eq:UpdateMu}
\eea
\STATE(3) Increase iteration index $n$.
\ENDWHILE{ if $ ||\vect x^{(n)}-\vect x^{(n-1)}|| \le tol$}
\ENSURE $\vect x^\ast = ~\vect x^{(n-1)} ,~  \gamma_i ^\ast = \frac{x_i^\ast}{\mu_i^{(n-1)}}$ for all $i=1,...,p$.
\end{algorithmic}
\end{algorithm}

To analyze the convergence properties of ICR, we first define the function $f_n : \mathbb{R}^p \rightarrow \mathbb{R}$ as follows: 
\bea
	f_n(\vect x) =  \vect x^T(\mat A^T \mat A +\lambda \mat I)\vect x  -2\vect y^T \mat A  \vect x  			+ \sum_{i=1}^{p} \frac{\rho_i}{\big|\mu_i^{(n-1)}\big|} |x_i|		\label{Eq:f_n}
\eea
which is another form of the functions to be minimized at each iteration of ICR. With this definition and assuming $\alpha$ is a constant that $\alpha <\frac{1}{2(q+p)}$, we propose the following two lemmas with proofs in the supplementary material$^1$:
\begin{lemma}
If \big|$\mu_j^{(n_0)}\big|< \alpha \rho_j$, then $x_j^{(n_0+1)} = 0$. ($\gamma_j \thickapprox \frac{x_j}{\mu_j^{(n_0)}} = 0$)
\end{lemma}
This lemma also implies that if $\big|\mu_j^{(n)}\big|< \alpha \rho_j$ for some $n$, then $x_j$ will remain zero for all the following iterations.
\begin{lemma}
If $\big|\mu_j^{(n)}\big| \ge \alpha \rho_j$ for all $n \ge n_0$, then there exists $N_j\ge n_0$ such that for all $n>N_j$ we have
\bea
	\bigg |\frac{1}{\big | \mu_j^{(n+1)} \big |} - 	\frac{1}{\big |\mu_j^{(n)} \big |} \bigg | \le \frac{c}{n+1}
\eea
where $c$ is some positive constant.
\end{lemma}
Another interpretation of this lemma is that as the number of iterations grows, the cost functions at each iteration of ICR get closer to each other. In view of these two lemmas, we can show that the sequence of optimal cost function values obtained from ICR algorithm forms a Quasi-Cauchy sequence \cite{Burton:QuasiCauchy_AmericanMAth2010}. In other words, this is a sequence of bounded values that their difference at two consecutive iterations gets smaller.
\begin{theorem}
After a sufficiently large $n$, the sequence of optimal cost function values obtained from ICR forms a Quasi-Cauchy sequence. i.e. $a_n = f_n(\vect x^{(n)})$  is a Quasi-Cauchy sequence of numbers.
\bea
    \big| f_{n+1}(\vect x^{(n+1)}) - f_n(\vect x^{(n)}) \big| \le \frac{c'}{n}. \label{Eq:Theorem1}
\eea
\end{theorem}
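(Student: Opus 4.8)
\emph{Proof strategy.} The plan is to bound $|a_{n+1}-a_n|$ by comparing the two successive optimization problems through the only quantities that change between them, the penalty weights $\rho_i/\big|\mu_i^{(\cdot)}\big|$, and then to control those changes by invoking Lemmas~1 and~2. The starting point is optimality: since $\vect x^{(n)}$ and $\vect x^{(n+1)}$ minimize $f_n$ and $f_{n+1}$ over the \emph{same} feasible set -- so the argument covers the unconstrained and the non-negative versions at once -- we have $f_n(\vect x^{(n)})\le f_n(\vect x^{(n+1)})$ and $f_{n+1}(\vect x^{(n+1)})\le f_{n+1}(\vect x^{(n)})$, and subtracting these two inequalities sandwiches the quantity of interest,
\[
f_{n+1}(\vect x^{(n+1)})-f_n(\vect x^{(n+1)})\;\le\; a_{n+1}-a_n\;\le\; f_{n+1}(\vect x^{(n)})-f_n(\vect x^{(n)}).
\]
Since $f_n$ and $f_{n+1}$ differ only in the penalty term,
\[
f_{n+1}(\vect z)-f_n(\vect z)=\sum_{i=1}^{p}\rho_i\left(\frac{1}{\big|\mu_i^{(n)}\big|}-\frac{1}{\big|\mu_i^{(n-1)}\big|}\right)|z_i|,
\]
so it suffices to bound this expression by $c'/n$ for $\vect z=\vect x^{(n)}$ and $\vect z=\vect x^{(n+1)}$; the sandwich then yields \eqref{Eq:Theorem1}.

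Next I would record that the iterates are uniformly bounded: since $\vect 0$ is feasible and $\rho_i>0$ we have $a_n\le f_n(\vect 0)=0$, while $f_n(\vect x)\ge \lambda\|\vect x\|_2^2-2\|\mat A^T\vect y\|_2\|\vect x\|_2$, so $\|\vect x^{(n)}\|_2\le 2\|\mat A^T\vect y\|_2/\lambda=:M$ for all $n$. The same quadratic lower bound also shows that $(a_n)$ is bounded below, which together with $a_n\le 0$ is the boundedness half of the Quasi-Cauchy assertion.

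Then I would fix a coordinate $j$ and split into the two exhaustive cases underlying Lemmas~1 and~2. If $\big|\mu_j^{(n_0)}\big|<\alpha\rho_j$ for some $n_0$, Lemma~1 forces $x_j^{(n)}=0$ for all $n>n_0$, so for $n>n_0+1$ the $j$-th summand above vanishes for both $\vect z=\vect x^{(n)}$ and $\vect z=\vect x^{(n+1)}$ -- even though $1/\big|\mu_j^{(\cdot)}\big|$ may blow up, it is multiplied by a zero. Otherwise $\big|\mu_j^{(n)}\big|\ge\alpha\rho_j$ for all $n$, and Lemma~2 supplies an $N_j$ with $\big|\,\big|\mu_j^{(n)}\big|^{-1}-\big|\mu_j^{(n-1)}\big|^{-1}\big|\le c/n$ for $n>N_j+1$, so that summand is at most $\rho_j cM/n$. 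Taking $n$ larger than the finitely many thresholds $n_0+1$ and $N_j+1$ and summing over $i=1,\dots,p$ gives $|f_{n+1}(\vect z)-f_n(\vect z)|\le c'/n$ with $c'=cM\sum_j\rho_j$, which with the sandwich proves \eqref{Eq:Theorem1}.

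The analytic content is already carried by Lemmas~1 and~2, so I expect the main obstacle to be bookkeeping rather than a genuine difficulty: one must check that the case split is exhaustive, that the unbounded weights $\rho_j/\big|\mu_j^{(n-1)}\big|$ -- which arise precisely on the coordinates that Lemma~1 sends to zero -- never actually enter the estimate because they multiply a vanishing $|z_j|$, and that the uniform bound $M$ (on which the second case relies) together with the finiteness of the coordinate set lets all the per-coordinate thresholds be absorbed into a single ``sufficiently large $n$''.
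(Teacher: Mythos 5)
Your proposal is correct and follows essentially the same route as the paper's own proof: the same optimality sandwich $f_{n+1}(\vect x^{(n+1)})-f_n(\vect x^{(n+1)})\le a_{n+1}-a_n\le f_{n+1}(\vect x^{(n)})-f_n(\vect x^{(n)})$, the same per-coordinate case split driven by Lemmas~1 and~2, and the same absorption of the finitely many thresholds into one $N_0$. Your only deviation is a small improvement: you derive the uniform bound $\|\vect x^{(n)}\|_2\le 2\|\mat A^T\vect y\|_2/\lambda$ from $f_n(\vect 0)=0$, whereas the paper simply assumes $|x_i|\le 1$ by normalization, and you make explicit that the potentially unbounded weights on the Lemma-1 coordinates are harmless because they multiply $x_j=0$, a point the paper leaves implicit.
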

\begin{proof}
We provide a sketch of the proof here, for more details please refer to the supplementary material$^1$.

Before proving the theorem, note that we can assume for a sufficiently large $N_0$, if $n\ge N_0$, then  $\big|\mu_j^{(n)}\big|$ is either always less that $\alpha\rho_j$ or always greater (details can be found in the supplementary material). We now proceed to prove the Theorem and show that for $n>N_0$, the sequence of $f_n(\vect x^{(n)})$ satisfies the following property (assuming $|x_i|<1$):
\bea
    &&\big| f_{n+1}(\vect x^{(n)}) - f_{n}(\vect x^{(n)})\big| = \Big| \sum_{i=1}^{p} \rho_i\Big( \frac{1}{\big| \mu_i^{(n)} \big|} - \frac{1}{\big| \mu_i^{(n-1)} \big|} \Big)|x_i| \Big| \nonumber\\
    &\le& \sum_{|\mu_i^{(n-1)}|<\alpha\rho_i} \rho_i\Bigg| \frac{1}{\big| \mu_i^{(n)} \big|} - \frac{1}{\big| \mu_i^{(n-1)} \big|} \Bigg||x_i| \nonumber\\
    &&~~~~~~~~~~~~~~~~~~~~~~~+ \sum_{|\mu_i^{(n-1)}|\ge\alpha\rho_i} \rho_i\Bigg| \frac{1}{\big| \mu_i^{(n)} \big|} - \frac{1}{\big| \mu_i^{(n-1)} \big|} \Bigg||x_i|\nonumber\\
    &\le &\sum_{|\mu_i^{(n-1)}|\ge\alpha\rho_i} \rho_i\frac{c}{n}|x_i| ~\le~ p \max{\{\rho_i\}} \frac{c}{n} ~\le~ \frac{c'}{n}. \label{Eq:Property1}
\eea
This property also holds for $\vect x^{(n+1)}$. Finally, We show that for $n>N_0$, $a_n = f_n(\vect x^{(n)})$ is Quasi-Cauchy.
Since the minimum value $f_{n+1}(\vect x^{(n+1)})$ is smaller than $f_{n+1}(\vect x^{(n)})$, we can write:
\beaa
    f_{n+1}(\vect x^{(n+1)}) - f_{n}(\vect x^{(n)}) \le f_{n+1}(\vect x^{(n)}) - f_{n}(\vect x^{(n)}) \le \frac{c'}{n},
\eeaa
where we used \eqref{Eq:Property1} for $n>N_0$. With the same reasoning for $n>N_0$ we have:
\beaa
    f_{n+1}(\vect x^{(n+1)}) - f_{n}(\vect x^{(n)}) \ge f_{n+1}(\vect x^{(n+1)}) - f_{n}(\vect x^{(n+1)}) \ge  -\frac{c'}{n}.
\eeaa
Combining these two inequalities results \eqref{Eq:Theorem1} for $n>N_0$.
\end{proof}
Combination of this theorem with a reasonable \emph{stopping criterion} guarantees the termination of the ICR algorithm. The stopping criteria used in this case is the norm of difference in the solutions $\vect x^{(n)}$ in consecutive iterations. At termination where the solution converges, the ratio $\frac{x_i}{\mu_i^{(n)}}$ will be zero for zero coefficients and approaches 1 for nonzero coefficients, which matches the value of $\gamma_i$ in both cases.
\section{Experimental Validation}
We now apply the ICR method to sparse signal recovery problem using spike and slab priors. Two experimental scenarios are considered: 1.) synthetic data and 2.) a real-world image recovery problem. In each case, comparisons are made against state of the art alternatives.

\noindent \emph{Synthetic data:}  We set up a typical experiment for sparse recovery as in \cite{Mohimani:fast_l_0_TSP2009, Yen:MM_VariableSelectionSpikeSlab_Stat2011} with a randomly generated Gaussian matrix $\mat A \in \mathbb{R}^{q\times p}$ and a sparse vector $\vect x_0 \in \mathbb{R}^p$. Based on $\mat A$ and $\vect x_0$, we form the observation vector $\vect y\in \mathbb{R}^q$ according to the additive noise model: $ \vect y = \mat A \vect x_0 + \vect n$ with $\sigma = 0.01$. The competitive state-of-the-art methods for spike and slab sparse recovery that we compare against are: (1) SpaRSA \cite{Wright:SpaRSA_TSP2009,SPARSA:Code_Online} which is a  powerful method to solve the problems of the form \eqref{Eq:MainOptProb}. 
(2) Majorization Minimization (MM) algorithm \cite{Yen:MM_VariableSelectionSpikeSlab_Stat2011} which aims to solve the spike and slab signal recovery problem through a majorization minimization approach. 
(3) Adaptive Elastic Net \cite{Zou:AdapElasticNet_AnnalStat2009,Timofte:AdapElasticNet_PatternRecog2014} based on a $\ell_1$ relaxation of cost function. Initialization for all methods is consistent as suggested in \cite{SPARSA:Code_Online}.

Table \ref{Tab:table1} reports the   experimental results for a small scale problem. We chose to first report results on a small scale problem in order to be able to use the IBM ILOG CPLEX optimizer \cite{CPLEX:IBM_Online} which is a very powerful optimization toolbox for solving many different optimization problems. It can also find the \emph{global} solution  to non-convex and mixed-integer programming problems. We used this feature of CPLEX  to compare ICR's solution with the global minimizer. For obtaining the results in Table \ref{Tab:table1}, we choose $p=64$, $q=32$ and the sparsity level of $\vect x_0$ is $10$. We generated $1000$ realizations of $\mat A, \vect x_0$ and $\vect n$ and recovered $\vect x$ using different methods. Two different methods are used for evaluation of different sparse recovery methods: First, we compare different methods in terms of cost function value averaged over realizations, which is a direct measure of the quality of the solution to \eqref{Eq:MainOptProb}. Second, we compare performance of different methods from the sparse recovery point of view,  and used the following figures of merit: mean square error (MSE) with respect to the global solution ($\vect x_g$) obtained by CPLEX optimizer, ``Support Match'' (SM) measure indicating how much the support of each solution matches to that of $\vect x_g$.

As can be seen from Table \ref{Tab:table1}, ICR outperforms the competing methods in many different aspects. In particular from the first row, we infer that ICR is a better solution to \eqref{Eq:MainOptProb} since it achieves a better minimum in average sense. Moreover, significantly higher support match (SM $= 97.25\%$ ) measure for ICR shows that ICR's solution shows much more agreement with the global solution. Finally, the ICR solution is also the closest to the global solution obtained from CPLEX optimizer in the sense of MSE (by more than one order of magnitude in comparison with competing solutions).
    \begin{table}[t]
      \centering
      \caption{Comparison of methods for $p=64$ and $q=32$. On Average, stopping criterion for ICR is achieved at iteration $n=11$.}
      \label{Tab:table1}
      \begin{tabular}{|l||c|c|c|c|c|}
      \hline
      Method                                &SpaRSA          & MM            & Elastic Net    & ICR            \\
      \hline\hline
      Avg $f(\vect x^*)$                    &1.8244E-2       &1.4721E-2      &3.1938E-2       &\textbf{1.3379E-2}            \\
      \hline
      MSE  vs. $\vect x_g$                  &9.2668E-4       &2.2290E-3      &2.9210E-4       &\textbf{5.7851E-5}           \\
      \hline
      SM            vs. $\vect x_g$ ($\%$)   &84.46           &83.12          &67.46           &\textbf{97.25}              \\
      \hline
      \end{tabular}
    \end{table}
    \begin{table}
      \centering
      \caption{Comparison of methods for $p=512$ and $q=128$. On Average, stopping criterion for ICR is achieved at iteration $n=15$.}
      \label{Tab:table2}
      \begin{tabular}{|l||c|c|c|c|c|}
      \hline
      Method                                &SpaRSA          & MM            & Elastic Net    & ICR         \\
      \hline\hline
      Avg $f(\vect x^*)$                    &4.7510E-2       &3.9953E-2      &4.7885E-2       &\textbf{3.9127E-2}            \\
      \hline
       MSE vs. $\vect x_0$                  &1.1740E-3       &2.5685E-3      &9.3378E-4       &\textbf{3.9277E-4}              \\
      \hline
      Sparsity Level                        &55.91           &64.18          &85.30           &\textbf{28.82}              \\
      \hline
      SM vs. $\vect x_0$ ($\%$)             &89.68           &82.41          &87.61           &\textbf{96.33}              \\
      \hline
      \end{tabular}
    \end{table}
\begin{figure}[t]
  \centering
  \includegraphics[width=0.42\textwidth]{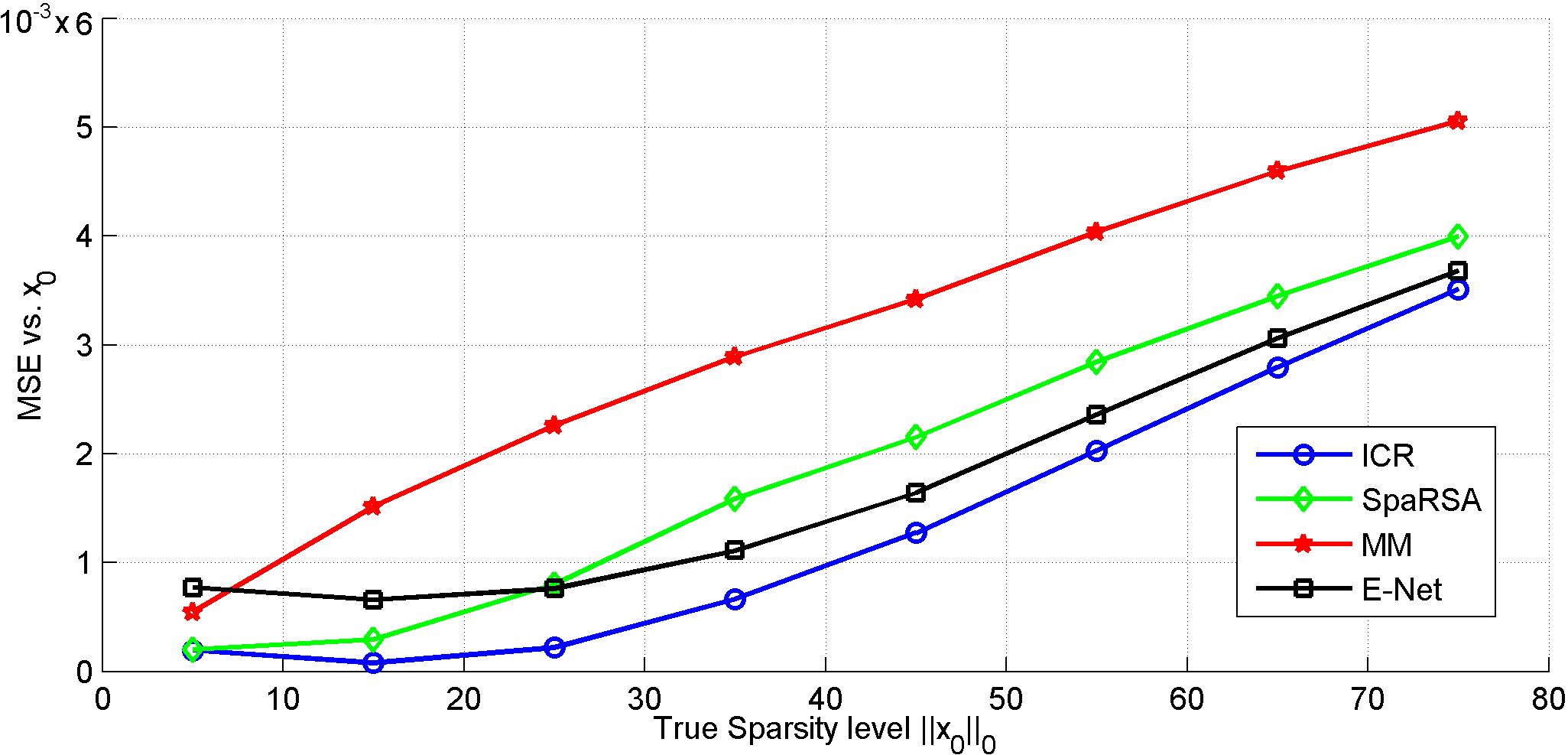}
  \caption{Comparison of average MSE each method versus sparsity level of $\vect x_0$.  }
  \label{Fig:MSEvsSPLevel}
\end{figure}

Next, we present results for a typical larger scale problem. We chose $p=512$, $q=128$ and set the sparsity level of $\vect x_0$ to be $30$ and carry out the same experiment as before. Because of the scale of the problem, the global solution is now unavailable and therefore, we compare the results against $\vect x_0$ which is the ``ground truth". Results are reported in Table \ref{Tab:table2}. Table \ref{Tab:table2} also additionally reports the average sparsity level of the solution  and it can be seen that the sparsity level of ICR is the closest to the true sparsity level of $\vect x_0$. In all other figures of merit, viz. the cost function value (averaged over realizations), MSE and support match vs. $\vect x_0$, ICR is again the best. Fig.\ \ref{Fig:MSEvsSPLevel} shows an alternate result as the MSE plotted against the sparsity level; once again the merits of ICR are readily apparent.

\noindent \emph{Image reconstruction:} In this part we aim to apply our ICR algorithm to real data for reconstruction of handwritten digit images from the well-known MNIST dataset \cite{Lecun:MNIST_Online}. The MNIST dataset contains 60000 digit images ($0$ to $9$) of size $28\times 28$ pixels. Most of pixels in these images are inactive and zero and only a few take non-zero values. Thus, these images are naturally sparse and fit into the spike and slab model. We set up this experiments such that a sparse signal $\vect x$ (vectorized image) is to be reconstructed from a smaller set of random measurements $\vect y$. For any particular image, we assume the smaller set of random measurement (150 measurements) is obtained by a Gaussian measurement matrix $\mat A \in \mathbb{R}^{150\times784}$ with added noise according to \eqref{Eq:y=Ax}. We compare our result against the following state-of-the-art image recovery methods for sparse images: 1.) SALSA-TV which uses the variable splitting proposed by Figueiredo \emph{et al.} \cite{MarioAndAfonso:ImageRecoverySALSA_TIP2010}  combined with Total Variation (TV) regularizers \cite{Chambolle:TV_Minimization_MathImagVision2004}. 2.) A Bayesian Image Reconstruction (BIR) \cite{Lu:SparseCodeBayesPerspec_NeuralNetLearn2013}, based on a more recent version of Bayesian image reconstruction method \cite{DobigeonAndHero:HierarchyBayesImageRecons_TIP2009} proposed by Hero \emph{et al.}. We also compare our results with Adaptive Elastic Net method \cite{Zou:AdapElasticNet_AnnalStat2009} which is commonly used in sparse image recovery problems. Finally, we also show the result of the non-negative version of ICR (ICR-NN) which explicitly enforces a non-negativity constraint on $\vect x$ which in this case corresponds to the intensity of reconstructed image pixels. Recovered images are shown in Fig. \ref{Fig:Digits} and the corresponding average reconstruction error (MSE) for the whole database for different methods appears next to each method. Clearly, ICR and ICR-NN outperform the other methods both visually and based on MSE value. It is also intuitively satisfying that ICR-NN which captures the non-negativity constraint natural to this problem, provides the best result overall.
\begin{figure}
\centering
    \includegraphics[width=0.42\textwidth]{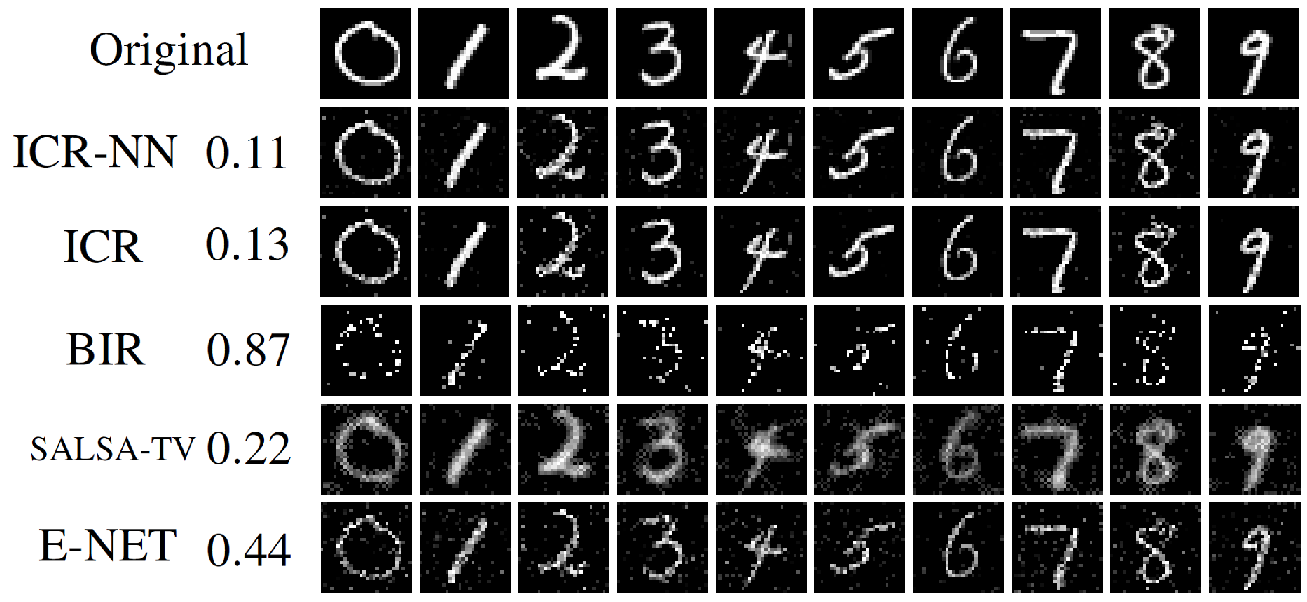}
    \caption{Examples of reconstructed images from MNIST dataset using different methods. The Numbers appeared next to each method is the average MSE for that method. On average, stoping criteria for ICR and ICR-NN are achieved at iteration $n=15$ and $n=29$, respectively.}
    \label{Fig:Digits}
\end{figure}

\section{Conclusion}
We develop a novel algorithm (ICR) for sparse recovery under spike and slab priors. Unlike known existing approaches, ICR does not simplify the optimization by assumptions/relaxations and hence affords a more general sparse structure. Experiments on synthetic data as well as a real-world image recovery problem confirms practical merits of ICR.  Future research may investigate further analysis of ICR properties and extensions to multi-task sparse recovery under collaborative spike and slab priors.

\setcounter{prop}{0}
\setcounter{lemma}{0}
\setcounter{theorem}{0}
\appendix[Experimental Validation]
In this Appendix, we show more experimental results from our framework to further support its significance in comparison with other state-of-the-art methods for spike and slab sparse recovery problem.
Following the same experimental setup for synthetic data as in the letter, we illustrate the performance of the ICR in comparison with others as the sparsity level of $\vect x_0$ ($||\vect x_0||_0$) changes. We vary the true sparsity level from only $5$ non-zero elements in $\vect x_0$ up to $95$ and compared MSE, support match percentage and the resulting sparsity level of the solutions from each method. Again we choose the length of sparse signal to be $p=512$ and number of observation to be $q=128$. Matrix $\mat A$ and sparse vector $\vect x_0$ are randomly generated and observation vector $\vect y$ is obtained by \eqref{Eq:y=Ax} with $\sigma = 0.01$. $1000$ realization of $\mat A$, $\vect x_0$ and $\vect n$ are generated for each sparsity level and the results are averaged over these $1000$ realizations. Figures \ref{Fig:MSEvsSPLevel}, \ref{Fig:SMvsSPLevel} and \ref{Fig:SpLevelvsSPLevel} illustrate these results.

Fig. \ref{Fig:SMvsSPLevel} illustrates that the support of ICR's solution is the closest to the support of $\vect x_0$. More than $90\%$ match between the support of ICR's solution  and that of $\vect x_0$ for a wide range of sparsity levels makes ICR very valuable to variable selection problems specially in Bayesian framework. Fig. \ref{Fig:SpLevelvsSPLevel} shows the actual sparsity level of solution for different methods. The dashed line corresponds to the true level of sparsity and ICR's solutions is the closest to the dashed line implying that the level of sparsity of ICR's solution matches the level of sparsity of $\vect x_0$ more than other methods. This also support the results obtained from Fig. \ref{Fig:SMvsSPLevel}.

\begin{figure}[h]
  \centering
  \includegraphics[width=0.48\textwidth]{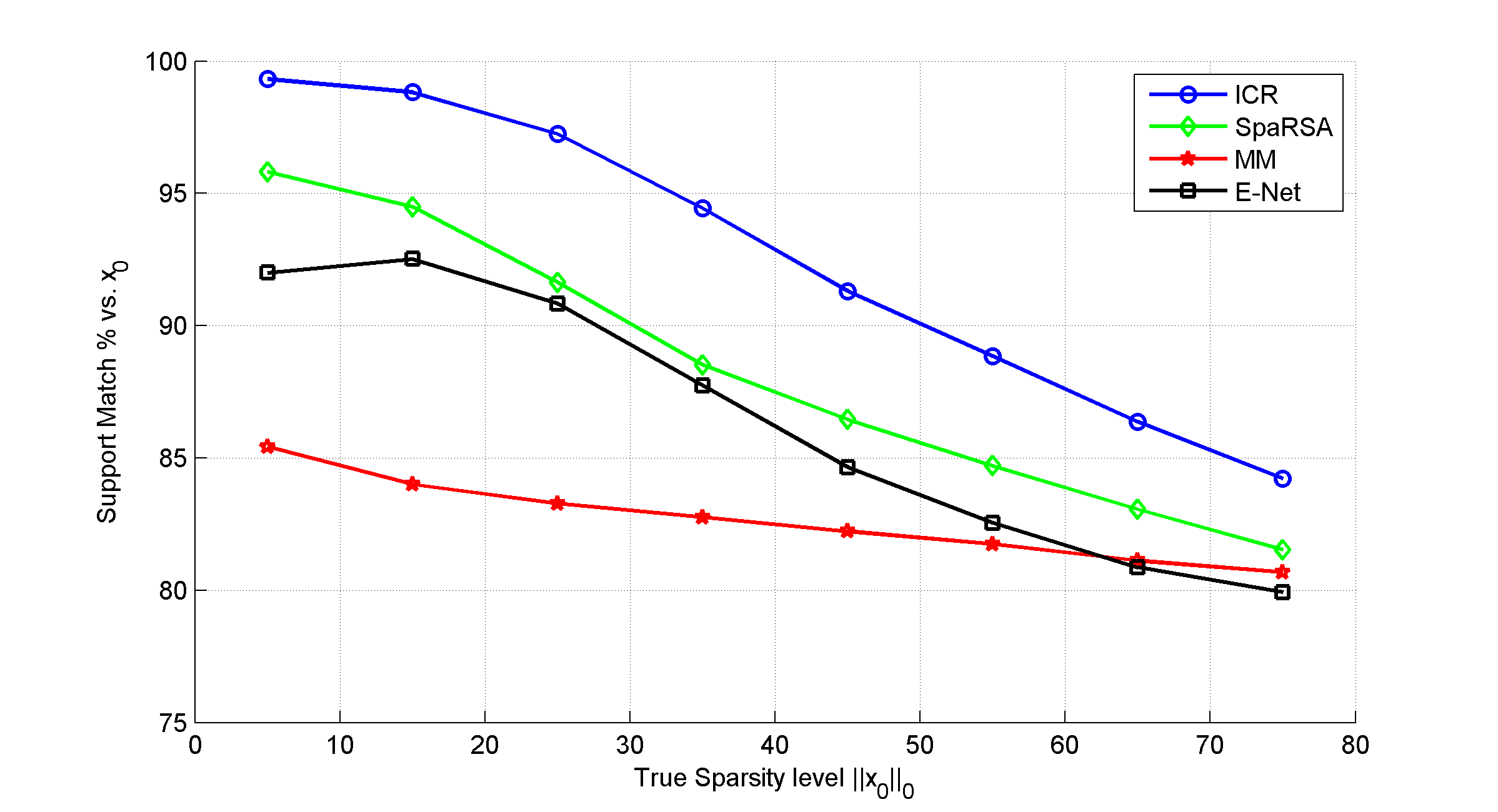}
  \caption{Comparison of average support match of the solution in $\%$ for each method versus sparsity level of $\vect x_0$.  }
  \label{Fig:SMvsSPLevel}
\end{figure}
\begin{figure}[h]
  \centering
  \includegraphics[width=0.48\textwidth]{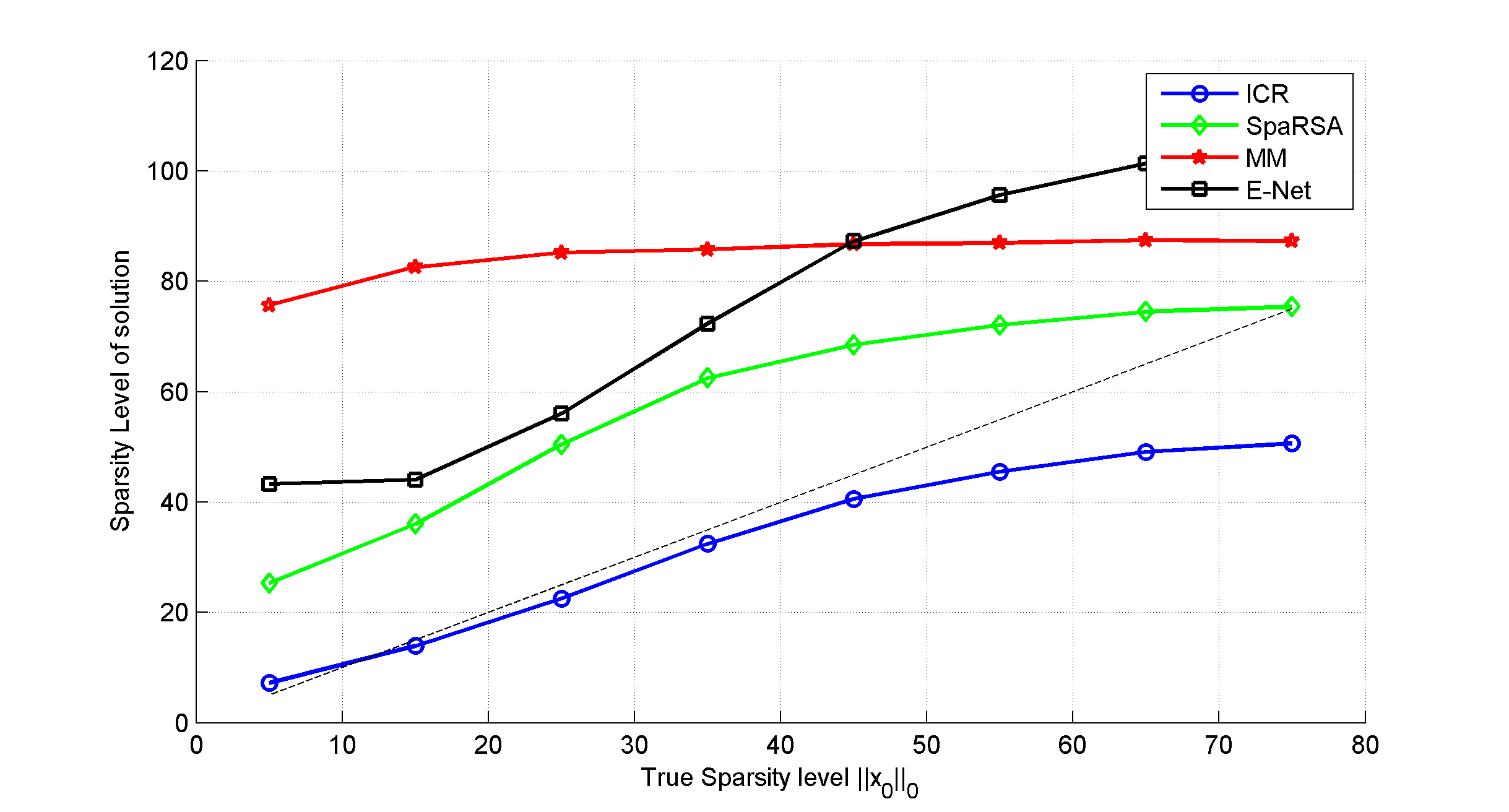}
  \caption{Comparison of average sparsity level obtained by each method versus sparsity level of $\vect x_0$. Dashed line shows the true level of sparsity  }
  \label{Fig:SpLevelvsSPLevel}
\end{figure}

\appendix[Analytical Results]
In this appendix, we present the proofs to the theoretical lemmas and theorems in the paper. For the rest of our analysis, without loss of generality we assume that $|y_i|\le 1$, $i=1...q$,   $|x_i|\le 1$, $i=1...p$ and columns of $\mat A$ have unity norm. We first begin with the proof to Proposition 1.
\begin{prop}
The MAP estimation in \eqref{Eq:MAP_Estimate} is equivalent to the following minimization problem:
\bea
    (\vect x^\ast, \vect \gamma^\ast ) &=& \arg\min_{\vect x, \vect \gamma} ~~   ||\vect y  - \mat A\vect x ||_{2}^2 + \lambda ||\vect x||_{2}^2 + \sum_{i=1}^{p}  \rho_{i}  \gamma_{i}
\eea
where $\rho_{i} \triangleq  \sigma^2\log\left(\frac{2\pi\sigma^2(1-\kappa_i)^2}{\lambda\kappa_i^2}\right)$.
\end{prop}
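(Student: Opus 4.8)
The plan is to convert the MAP problem \eqref{Eq:MAP_Estimate} into a minimization by taking $-\log$ of the posterior. By Bayes' rule the posterior $f(\vect x,\vect\gamma\mid\mat A,\vect y,\vect\kappa,\lambda,\sigma^2)$ is proportional to the product of the three model factors in \eqref{eq:pdf_y}, \eqref{eq:pdf_x} and \eqref{Eq:pdf_gamma}; since $\log$ is increasing, maximizing $f$ over $(\vect x,\vect\gamma)$ is the same as minimizing $-\log\big(p(\vect y\mid\cdot)\,p(\vect x\mid\cdot)\,p(\vect\gamma\mid\cdot)\big)$, and the whole claim reduces to tracking which terms depend on $(\vect x,\vect\gamma)$ and which are additive constants.

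First I would expand the three contributions. The Gaussian likelihood \eqref{eq:pdf_y} gives $\tfrac{1}{2\sigma^2}\|\vect y-\mat A\vect x\|_2^2$ up to an additive constant. For the spike-and-slab prior \eqref{eq:pdf_x} I would argue coordinate by coordinate: on the event $\{\gamma_i=1\}$ the coordinate $x_i$ carries the $\mathcal N(0,\sigma^2\lambda^{-1})$ density, contributing $\tfrac{\lambda x_i^2}{2\sigma^2}+\tfrac12\log\tfrac{2\pi\sigma^2}{\lambda}$, whereas on $\{\gamma_i=0\}$ the spike forces $x_i=0$; hence $\sum_{i:\gamma_i=1}\tfrac{\lambda x_i^2}{2\sigma^2}=\tfrac{\lambda}{2\sigma^2}\|\vect x\|_2^2$ and the logarithmic pieces collect into $\sum_i\gamma_i\,\tfrac12\log\tfrac{2\pi\sigma^2}{\lambda}$. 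The Bernoulli prior \eqref{Eq:pdf_gamma} contributes $-\sum_i[\gamma_i\log\kappa_i+(1-\gamma_i)\log(1-\kappa_i)]=-\sum_i\log(1-\kappa_i)+\sum_i\gamma_i\log\tfrac{1-\kappa_i}{\kappa_i}$.

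Then I would collect the coefficient of $\gamma_i$: it equals $\tfrac12\log\tfrac{2\pi\sigma^2}{\lambda}+\log\tfrac{1-\kappa_i}{\kappa_i}=\tfrac12\log\tfrac{2\pi\sigma^2(1-\kappa_i)^2}{\lambda\kappa_i^2}$. Discarding the constant $\sum_i\log(1-\kappa_i)$ together with the constants from the likelihood, and scaling the objective by $2\sigma^2>0$ (which does not change the minimizer), the function to be minimized becomes $\|\vect y-\mat A\vect x\|_2^2+\lambda\|\vect x\|_2^2+\sum_i\gamma_i\,\sigma^2\log\tfrac{2\pi\sigma^2(1-\kappa_i)^2}{\lambda\kappa_i^2}$, which is exactly \eqref{Eq:MainOptProb} with $\rho_i$ as defined.

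The step I expect to be the main obstacle is making the mixed discrete/continuous prior rigorous: $\delta_0$ is not a density with respect to Lebesgue measure, so one must specify the dominating measure against which the "MAP density" is written and check that the constraint $\gamma_i=\mathbb I(x_i\neq 0)$ is consistent with the joint optimization. I would handle this by conditioning on the support $S=\{i:\gamma_i=1\}$: on $S$ the prior is an honest Gaussian density, off $S$ the coordinates vanish identically, and the computation above is carried out on the $|S|$-dimensional block. Since the resulting objective value depends on $S$ only through $\sum_{i\in S}\rho_i=\sum_i\gamma_i\rho_i$, the joint minimization over $(\vect x,\vect\gamma)$ is well posed and yields the stated form, completing the proof.
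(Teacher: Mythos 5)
Your proposal is correct and follows essentially the same route as the paper's proof: take the negative log of the factored posterior, expand the Gaussian likelihood, the spike-and-slab prior, and the Bernoulli prior term by term, collect the coefficient of $\gamma_i$ into $\tfrac12\log\tfrac{2\pi\sigma^2(1-\kappa_i)^2}{\lambda\kappa_i^2}$, discard constants, and rescale by $2\sigma^2$. Your handling of the spike by conditioning on the support $S$ is in fact slightly cleaner than the paper's, which writes $-2\sum_i(1-\gamma_i)\log\delta_0$ and argues informally via $\delta_0=\mathbb{I}(x_i=0)$ that this term vanishes under the constraint $\gamma_i=\mathbb{I}(x_i\neq 0)$.
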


\begin{proof}
To perform the MAP estimation, note that the posterior probability is given by:
\be
f(\vect x,\vect \gamma, |\mat A,\vect y,\lambda, \vect \kappa) \propto f(\vect y |\mat A,\vect x,\vect \gamma,\sigma^2)f(\vect x|\vect \gamma,\sigma^2,\lambda) f(\vect \gamma | \vect \kappa).
\label{eq:joint-posterior}
\ee
The optimal $\vect x^\ast, \vect \gamma^\ast $ are obtained by MAP estimation as:
\be
(\vect x^\ast, \vect \gamma^\ast) = \arg \min_{\vect x,\vect \gamma} \left\{-2\log f(\vect x,\vect \gamma, |\mat A,\vect y,\lambda, \vect \kappa)\right\}.
\label{Eq:Log_MAP_Estimate}
\ee
We now separately evaluate each term on the right hand side of \eqref{eq:joint-posterior}. According to \eqref{eq:pdf_y} we have:
\beaa
f(\vect y |\mat A,\vect x,\vect \gamma,\sigma^2)  =    \frac{1}{(2\pi\sigma^2)^{q/2}} \exp\left\{-\frac{1}{2\sigma^2}(\vect y - \mat A \vect x)^T(\vect y - \mat A \vect x)\right\}
\eeaa
\beaa
\Rightarrow -2\log f(\vect y |\mat A,\vect x,\vect \gamma,\sigma^2)  =  q\log \sigma^2 + q\log (2\pi) + \frac{1}{\sigma^2}||\vect y - \mat A \vect x||^2.
\eeaa
Since $\gamma_i$ is assumed to be the indicator variable and only takes values $1$ and $0$, we can rewrite \eqref{eq:pdf_x} in the following form:
\beaa
     \vect x | \vect\gamma, \lambda, \sigma^2 & \sim &   \prod_{i=1}^{p} ~  \Big( \mathcal{N}(0,\sigma^2\lambda^{-1}) \Big) ^{\gamma_i} . \Big( \delta_0 \Big) ^{1-\gamma_i} \label{eq:pdf1_x}
\eeaa
Therefore
\beaa
& &f\big(\vect x |\vect \gamma,\sigma^2,\lambda\big) = \\
& &   \prod_{i=1}^{p} \left(\frac{1}{(2\pi\sigma^2/\lambda)^{1/2}}\right)^{\gamma_i} \exp\left(-\frac{\gamma_i x_i^2}{2\sigma^2\lambda^{-1}}\right)\delta_0^{1-\gamma_i}  \\
&=  &    \left(\frac{2\pi\sigma^2}{\lambda}\right)^{-\frac{1}{2}\sum_{i=1}^{p}\gamma_i} \exp\left\{-\frac{1}{2\sigma^2\lambda^{-1}}\sum_{i=1}^{p}\gamma_i x_i^2 \right\}   \prod_{i=1}^{p} \delta_0^{1-\gamma_i}\\
&= &\left(\frac{2\pi\sigma^2}{\lambda}\right) ^{-\frac{1}{2}  \sum_{i=1}^{p} \gamma_i} \exp\left(-\frac{  ||\vect x||_2^2 }{2\sigma^2 \lambda^{-1}} \right)   \prod_{i=1}^{p} \delta_0^{1-\gamma_i}
\eeaa
\beaa
&\Rightarrow & -2\log f(\vect x|\vect \gamma,\sigma^2,\lambda) = \\
& & \frac{  ||\vect x||_2^2}{\sigma^2\lambda^{-1}} + \log\left(\frac{2\pi\sigma^2}{\lambda}\right)  \sum_{i=1}^{p} \gamma_i -2 \sum_{i=1}^{p}(1-\gamma_i)\log\delta_0.
\eeaa
In fact $\delta_0 = \mathbb{I}(x_i = 0)$ and the final term on the right hand side evaluates to zero, since $\mathbb{I}(x_i = 0) = 1 \Rightarrow \log\mathbb{I}(x_i = 0) = 0$, and $\mathbb{I}(x_i = 0) = 0 \Rightarrow x_i \neq 0 \Rightarrow \gamma_i = 1 \Rightarrow (1-\gamma_i) = 0$.

Finally  \eqref{Eq:pdf_gamma} implies that
\beaa
f(\vect \gamma | \vect \kappa) & = &  \prod_{i=1}^{p}\kappa_i^{\gamma_i}(1-\kappa_i)^{1-\gamma_i} 
\eeaa
\beaa
\Rightarrow -2\log f(\vect \gamma | \vect \kappa) &=& -2 \sum_{i=1}^{p} \log \kappa_i^{\gamma_i} + \log (1-\kappa_i)^{1-\gamma_i} \\
&=& -2 \sum_{i=1}^{p} \gamma_i \log \kappa_i + (1-\gamma_i)\log (1-\kappa_i)\\
& = & -2 \sum_{i=1}^{p} \gamma_i \log \Big( \frac{\kappa_i}{1-\kappa_i}\Big) + \log (1-\kappa_i)\\
& = &  \sum_{i=1}^{p}\gamma_i\log\left(\frac{1-\kappa_i} {\kappa_i}\right)^2 - 2\sum_{i=1}^{p} \log(1-\kappa_i).
\eeaa
Plugging all these expressions back into  \eqref{Eq:Log_MAP_Estimate} and neglecting constant terms, we obtain:
\bea
(\vect x^\ast, \vect \gamma^\ast )  &=& \arg \min_{\vect x,\vect \gamma }
   q\log \sigma^2 + \frac{1}{\sigma^2}||\vect y - \mat A \vect x||^2 + \frac{  ||\vect x||_2^2}{\sigma^2\lambda^{-1}}  \nonumber \\
 &&  +  \log\left(\frac{2\pi\sigma^2}{\lambda}\right) \sum_{i=1}^{p} \gamma_i+  \sum_{i=1}^{p} \gamma_i\log\left(\frac{1-\kappa_i} {\kappa_i}\right)^2
\eea
Essentially, for fixed $\sigma^2$ The cost function will reduce to:
\bea
L(\vect x, \vect \gamma)& = &   ||\vect y - \mat A\vect x||_2^2 + \lambda ||\vect x||_2^2 +   \sum_{i=1}^{p}  \rho_ i \gamma_i
\eea
where $\rho_{i} \triangleq  \sigma^2\log\left(\frac{2\pi\sigma^2(1-\kappa_i)^2}{\lambda\kappa_i^2}\right)$.
\end{proof}

\begin{lemma}
If \big|$\mu_j^{(n_0)}\big|< \alpha \rho_j$, then $x_j^{(n_0+1)} = 0$. ($\gamma_j \thickapprox \frac{x_j}{\mu_j^{(n_0)}} = 0$)
\end{lemma}
\begin{proof}
Assume that for a specific $j$, $\big|\mu_j^{(n_0)}\big|< \alpha \rho_j$. Then for the next iteration the cost function to be minimized is as follows:
\bea
	f_{n_0+1}(\vect x) =  \vect x^T(\mat A^T \mat A +\lambda \mat I)\vect x  -2\vect y^T \mat A  \vect x  			+ \sum_{i=1}^{p} \frac{\rho_i}{\big|\mu_i^{(n_0)}\big|} |x_i|		\label{Eq:f_n+1}
\eea
Assume that the argument that minimizes \eqref{Eq:f_n+1} is $\vect x^{(n_0+1)}$. we can rewrite it in the following form:
\bea
    \vect x^{(n_0+1)} = \vect x_b + x_j \vect e_j
\eea
where $\vect e_j$ is the $j^{th}$ basis function with one at component $j$ and zeros elsewhere. $x_j$ is the $j^{th}$ element of $\vect x^{(n_0+1)}$ and $\vect x_b$ is equal to $\vect x^{(n_0+1)}$ except at $j^{th}$ element which is zero. We prove that if $\big|\mu_j^{(n_0)}\big|< \alpha \rho_j$, then $x_j = 0$.
\beaa
    f_{n_0+1}(\vect x_b)  &=& \vect x_b^T(\mat A^T \mat A +\lambda \mat I)\vect x_b  -2\vect y^T \mat A  \vect x_b  + \sum_{i=1}^{p} \frac{\rho_i}{\big|\mu_i^{(n_0)}\big|} |x_{b_i}|		\\
    f_{n_0+1}(\vect x^{(n_0+1)}) &=& ( \vect x_b+x_j \vect e_j )^T(\mat A^T \mat A +\lambda \mat I)( \vect x_b+x_j \vect e_j )   \\
    & &-2\vect y^T \mat A  ( \vect x_b+x_j \vect e_j ) + \sum_{i=1}^{p} \frac{\rho_i}{\big|\mu_i^{(n_0)}\big|} |x_{b_i}| \\
    & &+ \frac{\rho_j}{\big|\mu_j^{(n_0)}\big|} |x_j|
\eeaa
Therefore, their difference is:
\bea
    f_{n_0+1}\big (\vect x^{(n_0+1)}\big) - f_{n_0+1}\big(\vect x_b\big) ~ = ~x_j ^2 \vect e_j^T  (\mat A^T \mat A + \lambda \mat I) \vect e_j \nonumber\\
     ~~~~~+2 x_j \vect x_b^T (\mat A^T \mat A + \lambda \mat I) \vect e_j  - 2 x_j \vect y^T \mat A \vect e_j + \frac{\rho_j}{\big|\mu_j^{(n_0)}\big|} |x_j| \nonumber\\
    = \big |x_j \big| \Big ( |x_j| (\mat A^T \mat A +\lambda \mat I)_{jj} + \frac{\rho_j}{\big|\mu_j^{(n_0)}\big|}\Big ) \nonumber \\
    - 2 x_j \Big(  \vect y^T \mat A \vect e_j - \vect x_b^T(\mat A^T \mat A + \lambda \mat I)\vect e_j \Big) \label{Eq:difference}
\eea
We want to show that this difference is always positive except for $x_j=0$ which means $x_j$ must be zero in order for $f_{n_0+1}\big (\vect x^{(n_0+1)}\big)$ to be minimum. To do so, we show the following statements are true for nonzero $x_j$:
\beaa
    \Big| 2 x_j \Big(  \vect y^T \mat A \vect e_j - \vect x_b^T(\mat A^T \mat A + \lambda \mat I)\vect e_j \Big) \Big| <\big |x_j \big| \Big ( |x_j| (\mat A^T \mat A +\lambda \mat I)_{jj} + \frac{\rho_j}{\big|\mu_j^{(n_0)}\big|}\Big )
\eeaa
\bea
    \Leftrightarrow 2\Big|    \vect y^T \mat A \vect e_j - \vect x_b^T\mat A^T \mat A \vect e_j + \lambda \vect x_b^T \vect e_j \Big| &< &  |x_j| (\mat A^T \mat A +\lambda \mat I)_{jj} + \frac{\rho_j}{\big|\mu_j^{(n_0)}\big|} \nonumber\\
    \Leftrightarrow ~~~2 \Big| \vect y^T \mat A \vect e_j - \vect x_b^T\mat A^T \mat A\vect e_j \Big| &<&
    (1+\lambda)|x_j| + \frac{\rho_j}{\big|\mu_j^{(n_0)}\big|} \nonumber\\
    \Leftrightarrow ~~~~~~~2 \Big| (\vect y -\mat A \vect x_b)^T  \mat A \vect e_j \Big| &<&
    (1+\lambda)|x_j| + \frac{\rho_j}{\big|\mu_j^{(n_0)}\big|} \label{Eq:ineq}
\eea
In the above derivations, we used the fact that $(\mat A^T \mat A)_{jj} = 1$ since columns of $\mat A$ have unity norm. On the other hand, Cauchy-Schwarz inequality implies that,
\beaa
    2 \Big| (\vect y -\mat A \vect x_b)^T  \mat A \vect e_j \Big| &\le~~ 2||\vect y -\mat A \vect x_b||.||\mat A \vect e_j || &= ~~ 2||\vect y -\mat A \vect x_b|| \\
    &\le~~~ 2(||\vect y|| + ||\mat A \vect x_b||) &\le ~~2(\sqrt{q}+p)
\eeaa
Last inequality holds because of the fact that we assumed that magnitude of $x_i$ and $y_i$ do not exceed one. Also since we assumed $\big|\mu_j^{(n_0)}\big|< \alpha \rho_j$ and by definition of $\alpha$ we have:
\beaa
    (1+\lambda)|x_j| + \frac{\rho_j}{\big|\mu_j^{(n_0)}\big|} \ge \frac{1}{\alpha} \ge 2(q+p) \ge 2(\sqrt{q}+p)
\eeaa
Therefore, \eqref{Eq:ineq} is always true, since the right hand side is always greater than the left hand side. This implies that \eqref{Eq:difference} is positive for nonzero $x_j$ and, hence we must have $x_j$ = 0. Otherwise, it would contradict the fact that $f\big(\vect x^{(n_0+1)}\big)$ is the minimum value. Note that these are loose bounds and in practice they are easily satisfied. For example, $||\vect y - \mat A \vect x_b||$ is practically very small.
\end{proof}

\begin{lemma}
If $\big|\mu_j^{(n)}\big| \ge \alpha \rho_j$ for all $n \ge n_0$, then there exists $N_j\ge n_0$ such that for all $n>N_j$ we have
\bea
	\bigg |\frac{1}{\big | \mu_j^{(n+1)} \big |} - 	\frac{1}{\big |\mu_j^{(n)} \big |} \bigg | \le \frac{c}{n+1}
\eea
where $c$ is some positive constant.
\end{lemma}
\begin{proof}
%
%
Assume $\big|\mu_j^{(n)}\big| \ge \alpha\rho_j = \epsilon$.
First, note that it is straightforward to see that the difference of consecutive average values has the following property: $-\frac{1}{n+1}\le\big|\mu_j^{(n+1)}\big| - \big|\mu_j^{(n)}\big| \le \frac{1}{n+1}  $. Now, let $N_j = \frac{2}{\alpha\rho_j}$, then for all $n>N_j$ we have:
\bea
    \big|\mu_j^{(n)}\big| - \frac{1}{n+1} \le \big|\mu_j^{(n+1)}\big| \le \big|\mu_j^{(n)}\big| + \frac{1}{n+1}  \label{Eq:mu_jRange}
\eea
where the left hand side is  positive, since
\beaa
    \big|\mu_j^{(n)}\big| - \frac{1}{n+1} \ge \alpha \rho_j - \frac{1}{N_j} =\frac{\alpha \rho_j}{2} =\delta > 0 \label{eq:Temp}
\eeaa
Using this fact and \eqref{Eq:mu_jRange} we infer that:
\beaa
    &&\frac{1}{\big|\mu_j^{(n)}\big| + \frac{1}{n+1}}  \le \frac{1}{\big|\mu_j^{(n+1)}\big|} \le \frac{1}{\big|\mu_j^{(n)}\big| - \frac{1}{n+1}}\\
    &\Rightarrow  &\frac{1}{\big|\mu_j^{(n)}\big| + \frac{1}{n+1}} -\frac{1}{\big|\mu_j^{(n)}\big|} \le \frac{1}{\big|\mu_j^{(n+1)}\big|} -\frac{1}{\big|\mu_j^{(n)}\big|} \le \\
    && ~~~~~~~~~~~~~~~~~~~~~~~~~~~~~~~~~~~~~~~~\frac{1}{\big|\mu_j^{(n)}\big| - \frac{1}{n+1}} - \frac{1}{\big|\mu_j^{(n)}\big|}\\
    &\Rightarrow  &\frac{-\frac{1}{n+1}}{\Big(\big|\mu_j^{(n)}\big| + \frac{1}{n+1}\Big) \big|\mu_j^{(n)}\big| } \le \frac{1}{\big|\mu_j^{(n+1)}\big|} -\frac{1}{\big|\mu_j^{(n)}\big|} \le \\
    && ~~~~~~~~~~~~~~~~~~~~~~~~~~~~~~~~~~~~~~~~\frac{\frac{1}{n+1}}{\Big(\big|\mu_j^{(n)}\big| - \frac{1}{n+1}\Big)\big|\mu_j^{(n)} \big|}
\eeaa
In the last expression, we have:
\beaa
    \textit{RHS} = \frac{1}{(n+1)\Big(\big|\mu_j^{(n)}\big| - \frac{1}{n+1}\Big)\big|\mu_j^{(n)} \big|} \le \frac{1}{(n+1)\epsilon \delta}\\
    \textit{LHS} = \frac{-1}{(n+1)\Big(\big|\mu_j^{(n)}\big| + \frac{1}{n+1}\Big) \big|\mu_j^{(n)}\big| } \ge \frac{-1}{(n+1)\epsilon \delta}
\eeaa
Therefore,
\beaa
    \bigg|\frac{1}{\big|\mu_j^{(n+1)}\big|} -\frac{1}{\big|\mu_j^{(n)}\big|} \bigg| \le \frac{1}{(n+1)\epsilon\delta}, ~~~~ n>N_j.
\eeaa
\end{proof}

\begin{theorem}
After a sufficiently large $n$, the sequence of optimal cost function values obtained from ICR forms a Quasi-Cauchy sequence. i.e. $a_n = f_n(\vect x^{(n)})$  is a Quasi-Cauchy sequence of numbers.
\bea
    \big| f_{n+1}(\vect x^{(n+1)}) - f_n(\vect x^{(n)}) \big| \le \frac{c'}{n}
\eea
\end{theorem}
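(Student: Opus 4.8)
The plan is to follow the three-stage structure sketched by the authors: (i) reduce to a \emph{stable regime} where the threshold dichotomy of Lemmas~1--2 holds uniformly in $n$; (ii) bound the variation of the cost functional when only its coefficients change while its argument is held fixed; and (iii) sandwich $a_{n+1}-a_n$ between two such fixed-argument variations using the optimality of $\vect{x}^{(n)}$ for $f_n$ and of $\vect{x}^{(n+1)}$ for $f_{n+1}$.

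First I would establish the dichotomy. Observe that if $\big|\mu_j^{(m)}\big| < \alpha\rho_j$ for some $m$, then Lemma~1 gives $x_j^{(m+1)}=0$, hence by the averaging update $\mu_j^{(m+1)} = \frac{m}{m+1}\mu_j^{(m)}$, so $\big|\mu_j^{(m+1)}\big| < \big|\mu_j^{(m)}\big| < \alpha\rho_j$; by induction $\big|\mu_j^{(n)}\big| < \alpha\rho_j$ and $x_j^{(n)}=0$ for all $n>m$. Consequently each coordinate $j$ is eventually of exactly one type: either $\big|\mu_j^{(n)}\big| < \alpha\rho_j$ for all $n$ beyond some index, or $\big|\mu_j^{(n)}\big| \ge \alpha\rho_j$ for all $n$. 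Taking $N_0$ to be the maximum over $j$ of these threshold indices together with the constants $N_j$ from Lemma~2 for the coordinates of the second type, for every $n>N_0$ each coordinate is locked into its regime and Lemma~2 is in force.

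Next I would compute, for any fixed $\vect{x}$ with $|x_i|\le 1$, the identity $f_{n+1}(\vect{x}) - f_n(\vect{x}) = \sum_{i=1}^p \rho_i\big(\tfrac{1}{|\mu_i^{(n)}|} - \tfrac{1}{|\mu_i^{(n-1)}|}\big)|x_i|$, the quadratic part being independent of $n$. Split the sum by the regime of coordinate $i$: for coordinates with $\big|\mu_i^{(n-1)}\big| < \alpha\rho_i$, Lemma~1 forces $x_i^{(n)}=0$, and by the dichotomy also $\big|\mu_i^{(n)}\big| < \alpha\rho_i$, so $x_i^{(n+1)}=0$ as well; hence these terms vanish when $\vect{x}\in\{\vect{x}^{(n)},\vect{x}^{(n+1)}\}$. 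For coordinates with $\big|\mu_i^{(n-1)}\big| \ge \alpha\rho_i$, Lemma~2 bounds the coefficient jump by $\tfrac{c}{n}$; using $|x_i|\le1$ and summing over the $p$ coordinates gives $\big|f_{n+1}(\vect{x}) - f_n(\vect{x})\big| \le p\max_i\{\rho_i\}\tfrac{c}{n} =: \tfrac{c'}{n}$, valid at $\vect{x}=\vect{x}^{(n)}$ and at $\vect{x}=\vect{x}^{(n+1)}$ for $n>N_0$. Finally, since $\vect{x}^{(n+1)}$ minimizes $f_{n+1}$ and $\vect{x}^{(n)}$ minimizes $f_n$,
\begin{align*}
f_{n+1}(\vect{x}^{(n+1)}) - f_n(\vect{x}^{(n)}) &\le f_{n+1}(\vect{x}^{(n)}) - f_n(\vect{x}^{(n)}) \le \tfrac{c'}{n}, \\
f_{n+1}(\vect{x}^{(n+1)}) - f_n(\vect{x}^{(n)}) &\ge f_{n+1}(\vect{x}^{(n+1)}) - f_n(\vect{x}^{(n+1)}) \ge -\tfrac{c'}{n},
\end{align*}
which together yield $|a_{n+1}-a_n| \le c'/n$ for $n>N_0$.

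I expect the main obstacle to be the first step: pinning down a single $N_0$ after which the threshold behaviour is frozen for all coordinates simultaneously, and verifying carefully that Lemma~1 together with the contracting averaging update genuinely produces the monotone ``stays below'' property (ruling out a coordinate that crosses the threshold infinitely often). Everything after that is bookkeeping: confirming that the vanishing-coordinate argument applies to $\vect{x}^{(n+1)}$ as well as $\vect{x}^{(n)}$, and that $\max_i\{\rho_i\}$ and the Lemma~2 constant $c$ can be absorbed into a single $c'$ independent of $n$. (Boundedness of $a_n$, which makes the Quasi-Cauchy terminology literal, follows separately from $f_n(\vect{x}^{(n)})\le f_n(\vect 0)=0$ and coercivity of the positive-definite quadratic form $\mat A^T\mat A+\lambda\mat I$.)
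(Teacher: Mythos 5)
Your proposal is correct and follows essentially the same route as the paper's own proof: the same dichotomy on $\big|\mu_j^{(n)}\big|$ versus $\alpha\rho_j$ derived from Lemma~1, the same fixed-argument bound on $f_{n+1}-f_n$ split by regime and controlled via Lemma~2, and the same two-sided optimality sandwich. If anything, you are slightly more explicit than the paper in justifying why the below-threshold terms vanish at both $\vect{x}^{(n)}$ and $\vect{x}^{(n+1)}$ (via $x_i^{(n)}=x_i^{(n+1)}=0$ and the contraction $\mu_j^{(m+1)}=\tfrac{m}{m+1}\mu_j^{(m)}$), which the paper leaves implicit.
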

\begin{proof}
Before proving the theorem, note that we can assume for a sufficiently large $N_0$, if $n\ge N_0$, then  $\big|\mu_j^{(n)}\big|$ is either always less than $\alpha\rho_j$ or always greater. Because according to Lemma 1, we know that if $\big|\mu_j^{(n)}\big|$ once becomes smaller than $\alpha\rho_j$ for some $n$, it will remain less than $\alpha\rho_j$ for all the following iterations. Therefore, let $n_j, ~j=1...p$ be the iteration index that for all $n>n_j$, $ \big|\mu_j^{(n)}\big| <\epsilon$. Note that some $n_j$'s may be equal to infinity which means they are never smaller than $\epsilon$. For those $j$ that $n_j=\infty$, let $N_j$ to be the same as $N_j$ defined in proof of Lemma 2.
With these definitions, we now proceed to prove the Theorem. We first show that for $n>N_0 = \max(\max_j{n_j},\max_j{N_j})$, the sequence of $f_n(\vect x^{(n)})$ has the following property:
\bea
    &&\big| f_{n+1}(\vect x^{(n)}) - f_{n}(\vect x^{(n)})\big| = \Big| \sum_{i=1}^{p} \rho_i\Big( \frac{1}{\big| \mu_i^{(n)} \big|} - \frac{1}{\big| \mu_i^{(n-1)} \big|} \Big)|x_i| \Big| \nonumber\\
    &\le& \sum_{|\mu_i^{(n-1)}|<\epsilon} \rho_i\Bigg| \frac{1}{\big| \mu_i^{(n)} \big|} - \frac{1}{\big| \mu_i^{(n-1)} \big|} \Bigg||x_i| \nonumber\\
    &&~~~~~~~~~~~~~~~~~~~~~~~+ \sum_{|\mu_i^{(n-1)}|\ge\epsilon} \rho_i\Bigg| \frac{1}{\big| \mu_i^{(n)} \big|} - \frac{1}{\big| \mu_i^{(n-1)} \big|} \Bigg||x_i|\nonumber\\
    &\le &\sum_{|\mu_i^{(n-1)}|\ge\epsilon} \rho_i\frac{c}{n}|x_i| ~\le~ p \max{\{\rho_i\}} \frac{c}{n} ~\le~ \frac{c'}{n}. \label{Eq:Prop1}
\eea
This property also holds for $\vect x^{(n+1)}$. Finally, We show that for $n>N_0$, $a_n = f_n(\vect x^{(n)})$ is Quasi-Cauchy.
Since the minimum value $f_{n+1}(\vect x^{(n+1)})$ is smaller than $f_{n+1}(\vect x^{(n)})$, we can write:
\beaa
    f_{n+1}(\vect x^{(n+1)}) - f_{n}(\vect x^{(n)}) \le f_{n+1}(\vect x^{(n)}) - f_{n}(\vect x^{(n)}) \le \frac{c'}{n}
\eeaa
where we used \eqref{Eq:Prop1} for $n>N_0$. With the same reasoning for $n>N_0$ we have:
\beaa
    f_{n+1}(\vect x^{(n+1)}) - f_{n}(\vect x^{(n)}) \ge f_{n+1}(\vect x^{(n+1)}) - f_{n}(\vect x^{(n+1)}) \ge  -\frac{c'}{n}
\eeaa
Therefore,
\beaa
    \big| f_{n+1}(\vect x^{(n+1)}) - f_{n}(\vect x^{(n)}) \big| \le \frac{c'}{n}
\eeaa
for $n>N_0$.
\end{proof}
\emph{Remark:} Despite the fact that analytical results show a decay of order $\frac{1}{n}$ in difference between consecutive optimal cost function values, ICR shows much faster convergence in practice.


%
%
%
%
%
%
%
\ifCLASSOPTIONcaptionsoff
  \newpage
\fi
  \newpage
\bibliographystyle{IEEEtran}
\bibliography{SPlettersVer3}

\end{document}